\theoremstyle{plain} \newtheorem{remark}{Remark}
\theoremstyle{plain} \newtheorem{definition}{Definition}
\theoremstyle{plain} 
\theoremstyle{plain} 
\theoremstyle{plain} \newtheorem{proposition}{Proposition}
\theoremstyle{plain} \newtheorem{lemma}{Lemma}
\theoremstyle{plain} 
\theoremstyle{plain} \newtheorem{assumption}{Assumption}
\theoremstyle{plain} 
\theoremstyle{plain}
\newcommand{\qed}{\nobreak \ifvmode \relax \else
      \ifdim\lastskip<1.5em \hskip-\lastskip
      \hskip1.5em plus0em minus0.5em \fi \nobreak
      \vrule height0.5em width0.5em depth0.25em\fi}
\newenvironment{proof}[1][Proof]{\begin{trivlist}
\item[\hskip \labelsep {\bfseries #1}]}{\qed\end{trivlist}}
\newcommand\Reals {{\mathbb{R}}}
\newcommand\FB {{\mathfrak{B}}}
\newcommand\FM {{\mathfrak{M}}}
\newcommand\CA {{\mathcal{A}}}
\newcommand\CC {{\mathcal{C}}}
\newcommand\CL {{\mathcal{L}}}
\newcommand\CM {{\mathcal{M}}}
\newcommand\CO {{\mathcal{O}}}
\newcommand\CR {{\mathcal{R}}}
\newcommand\CS {{\mathcal{S}}}
\newcommand\CT {{\mathcal{T}}}
\newcommand\argmax{\mathop{\rm arg\,max}}
\newcommand \E {\mathop{\mbox{\ensuremath{\mathbf{E}}}}\nolimits}
\newcommand \Ep[1] {\mathop{\mbox{\ensuremath{\mathbf{E}_{#1}}}}}
\newcommand \Exi {\Ep{\xi}}
\renewcommand \Pr {\mathop{\mbox{\bf{P}}}\nolimits}
\newcommand \hx {\hat{x}}
\newcommand \pl{{\pi}}
\newcommand \inftynorm[1]{\left\|#1\right\|_\infty}
\newcommand \defn {\mathrel{\triangleq}}
\newcommand \MDPs {\ensuremath{\CM}}
\newcommand \lqed {}
\newcommand \st {s_t}
\newcommand \rt {r_t}
\newcommand \at {a_t}
\newcommand \xit {\xi_t}
\newcommand \wt {\omega_t}
\newcommand \stn {s_{t+1}}
\newcommand \rtn {r_{t+1}}
\newcommand \xitn {\xi_{t+1}}
\newcommand \wtn {\omega_{t+1}}
\newcommand \Vtpm {{V_{t,T}^{\pl,\mu}}}
\newcommand \meanMDP {{\bar{\mu}_\xi}}
\newcommand \VL {V_L}
\newcommand \VU {V_U}
\newcommand \Beliefs {\FB}
\newcommand \Vb {V^b}
\newcommand \Vbp {V^{b'}}
\newcommand \hV {\hat{V}}
\newcommand \hVL {\hat{V}_L}
\newcommand \hVU {\hat{V}_U}
\newcommand \tv {\tilde{v}}
\newcommand \hbs {\hat{b}^*}
\newcommand \hVLb {\hVL^{b}}
\newcommand \hVUb {\hVU^{b}}
\newcommand \Vhbs {V^{\hbs}}
\newcommand \VLb {\VL^{b}}
\newcommand \VLbp {\VL^{b'}}
\newcommand \VUb {\VU^{b}}
\newcommand \hDL {\hat{\Delta}_L}
\newcommand \Lceil {\left\lceil}
\newcommand \Rceil {\right\rceil}
\newcommand {\dd} {\,\mathrm{d}}
\newcommand\cset[2] {\left\{#1 \mathrel{:} #2\right\}}
\newcommand \SectStyle {}
\begin{document}

\author{Christos Dimitrakakis}

\title{Complexity of stochastic branch and bound methods for belief tree search 
in Bayesian reinforcement learning}

\maketitle

\begin{abstract}
  There has been a lot of recent work on Bayesian methods for
  reinforcement learning exhibiting near-optimal online performance.
  The main obstacle facing such methods is that in most problems of
  interest, the optimal solution involves planning in an infinitely
  large tree. However, it is possible to obtain stochastic lower and
  upper bounds on the value of each tree node. This enables us to use
  stochastic branch and bound algorithms to search the tree
  efficiently. This paper proposes two such algorithms and examines
  their complexity in this setting.
\end{abstract}


\section{\SectStyle{Introduction}}

Various Bayesian methods for exploration in Markov decision processes
(MDPs) and for solving known partially-observable Markov decision
processes (POMDPs), were proposed previously
(c.f.~\citep{poupart2006asd,duff2002olc,RossPineau:OnlinePlanningPOMDPs:jmlr2008}). However,
such methods often suffer from computational tractability problems.
Optimal Bayesian exploration requires the creation of an augmented MDP
model in the form of a tree~\citep{duff2002olc}, where the root node
is the current belief-state pair and children are all possible
subsequent belief-state pairs.  The size of the belief tree increases
exponentially with the horizon, while the branching factor is infinite
in the case of continuous observations or actions.

In this work, we examine the complexity of efficient algorithms for
expanding the tree. In particular, we propose and analyse stochastic
search methods similar to the ones proposed
in~\citep{DBLP:conf/nips/BubeckMSS08,Norkin:StochasticBnB:MP96}. Related
methods have been previously examined experimentally in the context of
Bayesian reinforcement learning
in~\citep{dimitrakakis:cimca08,wang:bayesian-sparse-sampling:icml:2005}.

The remainder of this section summarises the Bayesian planning
framework. Our main results are presented in
Sect.~\ref{sec:pac-bounds}. Section~\ref{sec:related_work} concludes
with a discussion of related work. Technical proofs and related
results are presented in the Appendix.

\subsection{Markov Decision Processes}
\label{sec:bamdps}
Reinforcement learning [c.f. \cite{Puterman:MDP:1994}] is discrete-time
sequential decision making problem, where we wish to act so as to
maximise the expected sum of discounted future rewards $\E
\sum_{k=1}^{T} \gamma^k r_{t+k}$, where $r_t \in \Reals$ is a
stochastic reward at time $t$. We are only interested in rewards from
time $t$ to $T > 0$, and $\gamma \in [0,1]$ plays the role of a
discount factor. Typically, we assume that $\gamma$ and $T$ are known
(or have known prior distribution) and that the sequence of rewards
arises from a Markov decision process $\mu$:
\begin{definition}[MDP]
  A Markov decision process is a discrete-time stochastic process
  with: A state $s_t \in \CS$ at time $t$ and a reward $r_t \in
  \Reals$, generated by the process $\mu$, and an action $a_t \in
  \CA$, chosen by the decision maker. We denote the distribution over
  next states $s_{t+1}$, which only depends on $s_t$ and $a_t$, by
  $\mu(s_{t+1}|s_t,a_t)$.  Furthermore $\mu(r_{t+1}|s_t, a_t)$ is a
  reward distribution conditioned on states and actions.  Finally,
  $\mu(\rtn, \stn | \st, \at) = \mu(\rtn | \st, \at)\mu(\stn | \st,
  \at)$.  \label{def:mdp}
\end{definition}
In the above, and throughout the text, we usually take $\mu(\cdot)$ to mean
$\Pr_\mu(\cdot)$, the distribution under the process $\mu$, for
compactness. Frequently such a notation will imply a marginalisation.
For example, we shall write $\mu(s_{t+k}|s_t,a_t)$ to mean:
\[
\sum_{s_{t+1}, \ldots, s_{t+k-1}} \mu(s_{t+k}, \ldots,
s_{t+1}|s_t,a_t).
\]
The decision maker takes actions according to a policy $\pi$, which
defines a distribution $\pi(a_t | s_t)$ over $\CA$, conditioned on the
state $s_t$, i.e. a set of probability measures over $\CA$ indexed by
$s_t$.  A policy $\pi$ is stationary if $\pi(a_t=a|s_t=s) =
\pi(a_{t'}=a|s_{t'}=s)$ for all $t,t'$.  The expected utility of a
policy $\pi$ selecting actions in the MDP $\mu$, from time $t$ to $T$
can be written as the {\em value function}:
\begin{align}
  \label{eq:value_function}
  \Vtpm(s) = 
  \Ep{\pi,\mu}\left(
    \sum_{k=1}^T \gamma^k r_{t+k}
    \Big|
    s_t
  \right),
\end{align}
where $\Ep{\pi,\mu}$ denotes the expectation under the Markov chain
arising from acting policy $\pi$ on the MDP $\mu$.  Whenever it is
clear from context, superscripts and subscripts shall be omitted for
brevity. The {\em optimal} value function will be denoted by $V^*
\defn \max_\pi V^\pi$.  
If the MDP is known, we can evaluate the optimal value function policy
in time polynomial to the sizes of the state and action
sets~\citep{Puterman:MDP:1994} via backwards induction (value
iteration).

\subsection{Bayesian Reinforcement Learning}
If the MDP is unknown, we may use a Bayesian framework to represent
our uncertainty~\citep{duff2002olc}.  This requires maintaining a
belief $\xit$, about which MDP $\mu \in \MDPs$ corresponds to reality.
More precisely, we define a measurable space $(\MDPs, \FM)$, where
$\MDPs$ is a (usually uncountable) set of MDPs, and $\FM$ is a
suitable $\sigma$-algebra. With an appropriate initial density
$\xi_0(\mu)$, we can obtain a sequence of densities $\xi_t(\mu)$,
representing our subjective belief at time $t$, by conditioning
$\xi_t(\mu)$ on the latest observations:
\begin{align}
  \xitn(\mu)
  &\defn
  \frac{\mu(\rtn, \stn | \st, \at) \xit(\mu)}
  {\int_{\MDPs}\mu'(\rtn,\stn | \st, \at) \, \xi_t(\mu') \dd\mu'}.
\end{align}
In the following, we write $\Exi$ to denote expectations with respect
to any belief $\xi$.  

\subsection{Belief-Augmented MDPs}

In order to optimally select actions in this framework, it is
necessary to {\em explicitly} take into account future changes in the
belief when planning~\citep{duff2002olc}.  The idea is to combine the
original MDP's state $\st$ and our belief state $\xit$ into a {\em
  hyper-state}. 
\begin{definition}[BAMDP]
  A {\em Belief-Augmented MDP} $\nu$ (BAMPD) is an MDP with a set of
  hyper-states $\Omega = \CS \times \Beliefs$, where $\Beliefs$ is an
  appropriate set of probability measures on $\MDPs$ and $\CS, \CA$
  are the state and action sets of all $\mu \in \MDPs$.  At time $t$,
  the agent observes the hyper-state $\wt = (\st, \xit) \in \Omega$
  and takes action $a_t \in \CA$.  We write the transition
  distribution as $\nu(\wtn | \wt, \at)$ and the reward distribution
  as $\nu(\rt | \wt)$.
\end{definition}
The hyper-state $\wt$ has the Markov property.  
This allows us to treat the BAMDP as an infinite-state MDP with
transitions $\nu(\omega_{t+1} | \omega_t, a_t)$, and rewards
$\nu(r_t|\omega_t)$.\footnote{Because of the way that the BAMDP $\nu$
  is constructed from beliefs over $\MDPs$, the next reward now
  depends on the next state rather than the current state and action.}
When the horizon $T$ is finite, we need only require expand the tree
 to depth $T - t$. Thus, backwards induction
 starting from the set of terminal hyper-states $\Omega_T$ and
 proceeding backwards to $T-1, \ldots, t$ provides a solution:
\begin{equation}
  V^*_n(\omega) = \max_{a \in \CA} \Ep{\nu}(r|\omega) + \gamma \!\!\!\!\!\! \sum_{\omega' \in \Omega_{n+1}} \!\!\!\! \nu(\omega'|\omega,a) V_{n+1}^* (\omega'),
  \label{eq:backwards_induction}
\end{equation}
where $\Omega_n$ is the set of hyper-states at time $n$.  We can
approximately solve infinite-horizon problems if we expand the tree to
some finite depth, if we have bounds on the value of leaf nodes.

\subsection{Bounds on the Value Function}
\label{sec:value_function_bounds}
We shall relate the optimal value function of the BAMDP,
$V^*(\omega)$, for some $\omega(s,\xi)$, to the value functions
$V_\mu^\pi$ of MDPs $\mu \in \MDPs$ for some $\pi$.  The optimal
policy for $\mu$ is denoted as $\pi^*(\mu)$. The {\em mean} MDP
resulting from belief $\xi$ is denoted as $\meanMDP$ and has the
properties: $\meanMDP(\stn|\st,\at) = \Exi[\mu(\stn|\st,\at)]$,
$\meanMDP(\rtn|\st,\at) = \Exi[\mu(\rtn|\st,\at)]$.
\begin{proposition}\cite{dimitrakakis:cimca08}
  For any $\omega = (s, \xi)$, the BAMDP value function $V^*$ obeys:
  \begin{equation}
    \int_{\MDPs} V_\mu^{\pi^*(\mu)}(s) \xi(\mu) d\mu 
    \geq 
    V^*(\omega)
    \geq
    \int_{\MDPs} V_\mu^{\pi^*(\meanMDP)}(s) \xi(\mu) \, d\mu
    \label{eq:value_bounds}
  \end{equation}
  \label{prop:value_bounds}
\end{proposition}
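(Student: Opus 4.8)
The plan is to derive both inequalities from a single marginalisation identity: for \emph{any} policy $\pi$ that selects actions as a function of the observed history (states, actions and rewards) --- which includes every admissible BAMDP policy --- the value of $\pi$ in the BAMDP started at $\omega = (s,\xi)$ decomposes as
\begin{equation}
V^\pi(\omega) = \int_{\MDPs} V_\mu^\pi(s)\, \xi(\mu)\, \dd\mu,
\label{eq:marginalisation}
\end{equation}
where $V_\mu^\pi(s)$ denotes the value of $\pi$, viewed as a (generally history-dependent) policy, in the ordinary MDP $\mu$ started at $s$. The two bounds then follow by choosing $\pi$ appropriately and comparing against the per-MDP optimal policies.

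To establish \eqref{eq:marginalisation}, I would argue by induction on the horizon that the law of the reward sequence $(\rtn,\dots,r_{t+T})$ generated by $(\pi,\nu)$ from $\wt=(s,\xi)$ coincides with the $\xi$-mixture of the laws generated by $(\pi,\mu)$ from $\st=s$. The base and inductive steps both rest on the definition of the BAMDP kernel, $\nu(\rtn,\stn\mid\wt,\at) = \Exi[\mu(\rtn,\stn\mid\st,\at)]$, together with the fact that the belief update is exactly Bayesian conditioning, so that after observing $(\rtn,\stn)$ the posterior $\xitn$ is the correct mixing measure for the continuation. Since $\pi$ never observes $\mu$ directly, its action distribution at each step is identical whether we think of it as acting on the hyper-state or on the realised history inside a sampled $\mu$; this is precisely what makes the mixture consistent. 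Applying linearity of expectation to $\sum_{k=1}^T\gamma^k r_{t+k}$ and comparing with the value-function definition \eqref{eq:value_function} then yields \eqref{eq:marginalisation}.

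For the upper bound, I would apply \eqref{eq:marginalisation} to the BAMDP-optimal policy $\pi^*$, so that $V^*(\omega) = \int_{\MDPs} V_\mu^{\pi^*}(s)\,\xi(\mu)\,\dd\mu$. Because $\pi^*$ restricts to an admissible (history-dependent) policy in each $\mu$, and $\pi^*(\mu)$ is by definition optimal in $\mu$, the standard MDP fact that the optimal stationary policy dominates every history-dependent policy gives $V_\mu^{\pi^*}(s) \le V_\mu^{\pi^*(\mu)}(s)$ pointwise in $\mu$; integrating against $\xi$ yields the left inequality. For the lower bound, I would instead take the fixed stationary policy $\pi^*(\meanMDP)$, which conditions only on $\st$ and is therefore a legitimate BAMDP policy. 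By \eqref{eq:marginalisation} its BAMDP value equals $\int_{\MDPs} V_\mu^{\pi^*(\meanMDP)}(s)\,\xi(\mu)\,\dd\mu$, and since $V^*(\omega)=\max_\pi V^\pi(\omega)$ dominates the value of this particular policy, the right inequality follows.

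The main obstacle is the careful justification of \eqref{eq:marginalisation}, and specifically the bookkeeping around how a belief-conditioned policy is reinterpreted as a history-dependent policy inside each sampled MDP. One must verify that the measure-theoretic marginalisation commutes with the sequential belief updates --- i.e.\ that the predictive law of the BAMDP is genuinely the posterior mixture at every step, including the one-step shift between action and reward noted after the BAMDP definition --- and that $\pi^*$ indeed restricts to a policy that cannot exploit hidden knowledge of $\mu$. Once \eqref{eq:marginalisation} is in hand, the optimality comparisons are immediate.
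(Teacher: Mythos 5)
Your proof is correct and follows essentially the same route as the paper: the lower bound comes from $V^*(\omega)\geq V^{\pi^*(\meanMDP)}(\omega)$ together with the marginalisation identity, and your upper bound (apply the identity to $\pi^*$, then dominate $V_\mu^{\pi^*}$ pointwise by $V_\mu^{\pi^*(\mu)}$ and integrate) is just the explicit form of the paper's $\max_x\int f(x,u)\,du\leq\int\max_x f(x,u)\,du$ step. The only difference is that you take care to justify the identity $V^\pi(\omega)=\int_{\MDPs} V_\mu^\pi(s)\,\xi(\mu)\,d\mu$ by induction on the horizon, whereas the paper simply asserts it (indeed writes it as a definition); your version is the more complete one.
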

\begin{proof}
  By definition, $V^*(\omega) \geq V^\pi(\omega)$ for all $\omega$,
  for any policy $\pi$. It is easy to see that the lower bound equals
  $V^{\pi^*(\meanMDP)}(\omega)$, thus proving the right hand side.
  The upper bound follows from the fact that for any function $f$,
  $\max_x \int f(x, u) \,du \leq \int \max_x f(x, u) \,du$.
\lqed
\end{proof}


If $\CM$ is not finite, then we cannot calculate the upper bound of
$V(\omega)$ in closed form.  However, we can use Monte Carlo sampling:
Given a hyper-state $\omega = (s, \xi)$, we draw $m$ MDPs from its
belief $\xi$: $\mu_1, \ldots, \mu_m \sim \xi$,\footnote{In the
  discrete case, we sample a multinomial distribution from each of the
  Dirichlet densities independently for the transitions. For the rewards we
  draw independent Bernoulli distributions from the Beta of each
  state-action pair.}  estimate the value function for
each $\mu_k$, $\tilde{v}^\omega_{U,k} \defn V^{\pi^*(\mu_k)}_{\mu_k}
(s)$, and average the samples:
$\hat{v}_{U,m}^\omega \defn \frac{1}{m}\sum_{k=1}^m \tilde{v}^\omega_{U,k}$.
Let $v^\omega_{U} \defn \int_{\CM} \xi_\omega(\mu) V_\mu^*(s_\omega)
\, d\mu$. Then, $\lim_{m \to \infty} [\hat{v}^\omega_{U,m}] =
v^\omega_U$ almost surely and $\E[\hat{v}^\omega_{U,m}] =
v^{\omega}$.

Lower bounds can be calculated via a similar procedure.  We begin by
calculating the optimal policy $\pi^*(\meanMDP)$ for the mean MDP
$\meanMDP$ arising from $\xi$. We then compute
$\tilde{v}^\omega_{L,k} \defn V^{\pi^*(\meanMDP)}_{\mu_k}$, the value
of that policy for each sample $\mu_k$ and estimate
$\hat{v}_{L,m}^\omega \defn \frac{1}{m}\sum_{k=1}^m
\tilde{v}^\omega_{L,k}$.

\section{\SectStyle{Complexity of belief tree search}}
\label{sec:pac-bounds}
We now present our main results.  Detailed proofs are given in the
appendix.  We search trees which arise in the context of planning
under uncertainty in MDPs using the BAMDP framework. We can use value
function bounds on the leaf nodes of a partially expanded BAMDP tree
to obtain bounds for the inner nodes through backwards induction. The
bounds can be used both for action selection and for further tree
expansion.  However, the bounds are estimated via Monte Carlo
sampling, something that necessitates the use of stochastic branch and
bound technique to expand the tree.

We analyse a set of such algorithms. The first is a search to a fixed
depth that employs exact lower bounds. We then show that if only
stochastic bounds are available, the complexity of fixed depth search
only increases logarithmically.  We then present two stochastic branch
and bound algorithms, whose complexity is dependent on the number of
near-optimal branches. The first of these uses bound samples on leaf
nodes only, while the second uses samples obtained in the last half of
the parents of leaf nodes, thus using the collected samples more
efficiently.
\begin{figure}
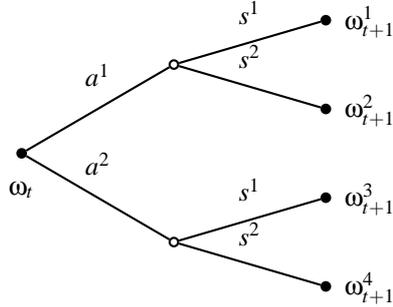

  \begin{center}
    $
    \pstree[treemode=R,radius=2pt]
    {\TC*[radius=2pt]~{\omega_t}}
    {
      \pstree{\Tc{2pt}\taput{a^1}}
      {
        \TC*~{\omega_{t+1}^1}\taput{s^1}
        \TC*~{\omega_{t+1}^2}\taput{s^2}
      }
      \pstree{\Tc{2pt}\taput{a^2}}
      {
        \TC*~{\omega_{t+1}^3}\taput{s^1}
        \TC*~{\omega_{t+1}^4}\taput{s^2}
      }
    }
    $
    \caption{A belief tree, where the rewards are ignored for
      simplicity, with actions $\CA=\{a^1, a^2\}$ and states
      $\CS=\{s^1, s^2\}$.}
    \label{fig:belief-tree}
  \end{center}
\end{figure}

\subsection{Assumptions and Notation}
We present the main assumptions concerning the tree search, pointing
out the relations to Bayesian RL.  The symbols $V$ and $v$ have been
overloaded to make this correspondence more apparent.  The tree that
has a branching factor at most $\phi$. The branching is due to both
action choices and random outcomes (see Fig.\ref{fig:belief-tree}).
Thus, the nodes at depth $k$ correspond to the set of hyper-states
$\{\omega_{t+k}\}$ in the BAMDP.  By abusing notation, we may also
refer to the components of each node $\omega = (s, \xi)$ as
$s(\omega), \xi(\omega)$.

We define a branch $b$ as a {\em set} of {\em policies} (i.e.  the set
of all policies starting with a particular action). The value of a
branch $b$ is $V^b \defn \max_{\pi \in b} V^\pi$. The root branch is
the set of all policies, with value $V^*$. A hyper-state $\omega$ is
$b$-reachable if $\exists \pi \in b$ s.t
$\Pr_{\pi,\nu}(\omega|\omega_t) > 0$.Any branch $b$ can be partitioned
at any $b$-reachable $\omega$ into a set of branches $B(b,\omega)$. A
possible partition is any $b_i = \cset{\pi \in b}{i=\argmax_a
  \pi(a|\omega)}$ for any $b_i \in B(b,\omega)$.  We simplify this by
considering only deterministic policies.
We denote the $k$-horizon value function by $V^b(k) \defn \max_{\pi
  \in b} V^\pi_{t,k}(\omega_t)$. For each tree node $\omega = (s,
\xi)$, we define upper and lower bounds $v_U(\omega) \defn
\Ep{\xi}[V^*_\mu(s)]$, $v_L(\omega) \defn \Ep{\xi}[V^{\pi^*(\meanMDP)}(s)]$, from
\eqref{eq:value_bounds}.  By fully expanding the tree to depth $k$ and
performing backwards induction \eqref{eq:backwards_induction}, using
either $v_U$ or $v_L$ as the value of leaf nodes, we obtain
respectively upper and lower bounds $\VU^b(k), \VL^b(k)$ on the value
of any branch.  Finally, we use $\CC(\omega)$ for the set of immediate
children of a node $\omega$ and the short-hand $\Omega_k$ for
$\CC^k(\omega)$, the set of all children of $\omega$ at depth $k$. We
assume the following:
\begin{assumption}[Uniform linear convergence]
  There exists $\gamma \in (0,1)$ and $\beta > 0 $ s.t. for any
  branch $b$, and depth $k$, $\Vb - \VLb(k) \leq \beta \gamma^k$,
  $\VUb(k) - \Vb \leq \beta \gamma^k$.
  \label{ass:convergence}
\end{assumption}
\begin{remark}
  For BAMDPs with $r_t \in [0,1]$ and $\gamma < 1$,
  Ass.~\ref{ass:convergence} holds, from boundedness and the
  geometric series,
  with $\beta =
  1/(1-\gamma)$, since $\VL^b(k)$ and $\VU^b(k)$ are the $k$-horizon
  value functions with the value of leaf nodes bounded in $1/(1-\gamma)$.
\end{remark}

We analyse algorithms which search the tree and then select an (action)
branch $\hat{b}^*$. For each algorithm, we examine the number of leaf
node evaluations required to bound the regret $V^* - V^{\hat{b}^*}$.

\subsection{Flat Search}
With exact bounds, we can expand all branches to a fixed depth and
then select the branch $\hat{b}^*$, with the highest lower bound. This
is Alg.~\ref{alg:flat_oracle_search}, with complexity given by the
following lemma.
\begin{algorithm}[t]
  \caption{Flat oracle search}
  \label{alg:flat_oracle_search}
  \begin{algorithmic}[1]
    \STATE Expand all branches until depth $k = \log_\gamma \epsilon/\beta$
    or $\hDL > \beta\gamma^k - \epsilon$.
    \STATE Select the root branch $\hbs = \argmax_b \VLb(k)$.
  \end{algorithmic}
\end{algorithm}
\begin{lemma}
  Alg.~\ref{alg:flat_oracle_search} on a tree with branching factor
  $\phi$, $\gamma \in (0,1)$, samples $\CO(\phi^{1+\log_\gamma
      \epsilon/\beta})$ times to bound the regret by
  $\epsilon$.
  \label{lem:flat_oracle}
\end{lemma}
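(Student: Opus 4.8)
The plan is to split the lemma into a \emph{correctness} claim (the chosen branch $\hbs$ has regret at most $\epsilon$) and a \emph{complexity} claim (the stated number of evaluations), and to derive each directly from Assumption~\ref{ass:convergence}. No deep argument is needed; the work is in assembling the bound inequalities carefully.

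For correctness, I would first note that the terminal depth $k = \log_\gamma \epsilon/\beta$ is chosen precisely so that $\beta\gamma^k = \epsilon$. Writing $b^*$ for the optimal root branch, so that $V^{b^*} = \Vs$, and $\hbs = \argmax_b \VLb(k)$ for the selected branch, I would chain the inequalities $\Vhbs \geq \VLhbs(k) \geq \VLbs(k) \geq V^{b^*} - \beta\gamma^k = \Vs - \epsilon$: the outer two steps are the lower-bound validity and the gap half of Assumption~\ref{ass:convergence} applied to $\hbs$ and to $b^*$ respectively, while the middle step is just the defining property of the $\argmax$. Rearranging gives the regret bound $\Vs - \Vhbs \leq \epsilon$.

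For complexity, in the worst case the algorithm builds the full depth-$k$ tree of branching factor $\phi$, which contains $\sum_{j=0}^{k}\phi^j = \CO(\phi^{k+1})$ nodes, each requiring a bounded number of bound evaluations; substituting $k=\log_\gamma\epsilon/\beta$ yields $\CO(\phi^{1+\log_\gamma\epsilon/\beta})$. Rounding $k$ up to an integer only multiplies the count by at most $\phi$, which the $\CO(\cdot)$ absorbs. The one step genuinely needing thought is the disjunctive early-stopping rule, which may halt at some $k' < k$ once $\hDL > \beta\gamma^{k'}-\epsilon$. Here I would re-run the correctness chain but instead bound $\Vs = V^{b^*} \leq \max_{b \neq \hbs}\VUb(k')$ and $\Vhbs \geq \VLhbs(k')$, so that the regret is controlled by the margin between the selected lower bound and the best competing upper bound; the threshold $\beta\gamma^{k'}-\epsilon$ is exactly what keeps this margin, and hence the regret, below $\epsilon$. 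Since early stopping can only reduce the number of expansions, it never worsens the worst-case count, so the complexity estimate remains governed by full expansion to depth $k$.
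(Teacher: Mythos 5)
Your proof is correct and follows essentially the same route as the paper's: the same chain of inequalities (validity of the lower bound, Assumption~\ref{ass:convergence}, and the $\argmax$ property of $\hbs$) yields regret at most $\beta\gamma^k = \epsilon$, and the same geometric count $\sum_{j}\phi^j < \phi^{k+1}$ gives the stated sample bound. Your extra remarks on integer rounding and on the early-stopping clause are refinements the paper's proof silently omits, but they do not alter the argument.
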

\begin{proof}Bound the $k$-horizon value function error with
  Ass.~\ref{ass:convergence} and note that there are $\phi^{k+1}$
  leaves.
\end{proof}
\iftrue
In our case, we only have a stochastic lower bound on the value of
each node.  Algorithm~\ref{alg:flat_stochastic_search} expands the
tree to a fixed depth and then takes multiple samples from each leaf
node.
\begin{algorithm}[t]
  \caption{Flat stochastic search}
  \label{alg:flat_stochastic_search}
  \begin{algorithmic}[1]
    \STATE {\sc FSSearch}$(\omega_t,k,m)$
    \STATE Let $\Omega_k = \cset{\omega_{t+k}^i}{i=1,\ldots,\phi^k}$ 
    be the set of all $k$-step children of $\omega$
    \FOR {$\omega \in \Omega_k$}
    \STATE Draw $m$ samples $\tilde{v}^\omega_{L,j} = V_\mu^\pi$,
    $\mu \sim \xi(\omega)$
    \STATE $\hat{v}_L^\omega = \frac{1}{m} \sum_{j=1}^m \tilde{v}^\omega_{L,j}$,
    \ENDFOR
    \STATE Calculate $\hat{V}^b$
    \RETURN $\hat{b}^* = \argmax \hat{V}^b$.
  \end{algorithmic}
\end{algorithm}
\fi
\iftrue
\begin{lemma}
  Calling Alg.~\ref{alg:flat_stochastic_search} with $k =
  \lceil\log_\gamma \epsilon/2\beta\rceil$, $m = 2 \lceil \log_\gamma
  (\epsilon/2\beta) \rceil \cdot \log \phi$, we bound the regret by
  $\epsilon$ using
  $\CO\left(
    \phi^{1 + \log_\gamma \epsilon/2\beta} \log_\gamma (\epsilon/2\beta) \cdot \log \phi
  \right)$
  samples.
  \label{lem:flat_stochastic_search}
\end{lemma}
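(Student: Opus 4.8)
The plan is to bound the regret $\Vs - V^{\hbs}$ by splitting it into a deterministic truncation error, governed by the search depth $k$ through Ass.~\ref{ass:convergence}, and a stochastic error coming from the Monte Carlo leaf averages. The choice $k = \lceil \log_\gamma \epsilon/2\beta \rceil$ forces $\beta\gamma^k \le \epsilon/2$, so that exactly as in Lem.~\ref{lem:flat_oracle} the \emph{oracle} selection $\argmax_b \VLb(k)$ already satisfies $\max_b \VLb(k) \ge \Vs - \beta\gamma^k \ge \Vs - \epsilon/2$. It therefore remains to show that replacing the exact lower bounds $\VLb(k)$ by the sampled values $\hVLb(k)$ costs at most a further $\epsilon/2$ in regret, so that combining the two halves with the lower-bound property $V^{\hbs} \ge \VLhbs(k)$ yields the claim.

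First I would quantify how leaf sampling error propagates to the root. For a fixed deterministic policy $\pi$, the truncated value is affine in the leaf values: the reward terms are deterministic and the leaf contribution is the $\gamma^k$-scaled convex combination $\gamma^k \sum_{\omega \in \Omega_k} \Pr_{\pi,\nu}(\omega \given \omega_t)\, v_L(\omega)$. Hence the policy-level error is at most $\gamma^k \max_{\omega \in \Omega_k} |\hat v_L^\omega - v_L(\omega)|$, and since the branch value is a maximum over $\pi \in b$ and the maximum is $1$-Lipschitz, the same bound holds uniformly for $|\hVLb(k) - \VLb(k)|$ over all branches $b$. The key consequence is that we never union-bound over the exponentially many policies inside a branch: it suffices to control the single quantity $\max_{\omega \in \Omega_k}|\hat v_L^\omega - v_L(\omega)|$, the worst error over the $\phi^k$ leaves.

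Next I would apply concentration. Each $\hat v_L^\omega$ is an average of $m$ i.i.d. samples $V_\mu^\pi$, $\mu \sim \xi(\omega)$, unbiased for $v_L(\omega)$ and bounded in $[0,\beta]$ by the Remark following Ass.~\ref{ass:convergence}. Hoeffding's inequality with a union bound over the $\phi^k$ leaves controls $\Pr\!\big(\max_\omega |\hat v_L^\omega - v_L(\omega)| > \beta/2\big)$, and the choice $m = 2\lceil\log_\gamma(\epsilon/2\beta)\rceil\log\phi = 2k\log\phi$ is tuned so that this worst leaf error is at most $\beta/2$ with the required confidence. Feeding this into the propagation bound, the branch-level error is at most $\gamma^k\,\beta/2 \le (\epsilon/2\beta)(\beta/2) = \epsilon/4$; note that the depth choice is used a second time here, since the heavy $\gamma^k$ discounting is precisely what lets a constant-order $\beta/2$ error per leaf still yield an $\epsilon/4$ error at the root. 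A standard $\argmax$ argument then gives $\VLhbs(k) \ge \VLbs(k) - 2(\epsilon/4) = \VLbs(k) - \epsilon/2$ for $\hbs = \argmax_b \hVLb(k)$, and chaining with the oracle truncation bound and $V^{\hbs}\ge\VLhbs(k)$ delivers regret at most $\epsilon$. The sample count is then immediate: $\phi^k$ leaves sampled $m$ times each gives $\phi^k m = \CO\!\big(\phi^{1+\log_\gamma \epsilon/2\beta}\,\log_\gamma(\epsilon/2\beta)\,\log\phi\big)$.

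I expect the main obstacle to be the treatment of the maximum over policies within a branch: a naive union bound would inject a spurious exponential factor and ruin the complexity, so the crux is the affine-plus-Lipschitz observation that reduces everything to the $\phi^k$ leaf estimators. The one place demanding care with constants is making the interplay of $\gamma^k$, the $\beta/2$ leaf tolerance, and the resulting $m$ line up with the stated bound; in particular the claim is most naturally read as a bound on the expected (or high-confidence) regret, since the sampling is inherently random.
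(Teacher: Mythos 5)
Your proposal follows essentially the same route as the paper's proof: split the regret into a depth-truncation term of $\epsilon/2$ (via Ass.~\ref{ass:convergence} and the choice of $k$) and a sampling term controlled by Hoeffding's inequality with a union bound over the $\phi^k$ leaves, with the $\gamma^k$ discounting of leaf errors supplying the slack that keeps $m$ logarithmic, and the sample count obtained as $\phi^k m$. The only difference is cosmetic --- you apply Hoeffding per leaf with tolerance $\beta/2$ and propagate, while the paper applies it to the branch estimate with tolerance $\epsilon/4$ and effective range $\beta\gamma^k$ --- and the constant-level looseness you flag (the stated $m$ makes the union-bounded failure probability only $\CO(1)$ rather than $\CO(\epsilon)$) is present in the paper's own derivation as well.
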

\begin{proof}The regret now is due to both limited depth and
  stochasticity. We bound each by $\epsilon/2$, the first via
  Lem.~\ref{lem:flat_oracle} and the second via Hoeffding's
  inequality.
\end{proof}
Thus, stochasticity mainly adds a logarithmic factor to the oracle
search. We now consider two algorithms which do not search to a fixed
depth, but select branches to deepen adaptively.
\fi


\subsection{Stochastic Branch and Bound 1}
A stochastic branch and bound algorithm similar to those examined here
was originally developed by \citet{Norkin:StochasticBnB:MP96} for
optimisation problems. At each stage, it takes an additional sample at
each leaf node, to improve their upper bound estimates, then
expands the node with the highest mean upper
bound. Algorithm~\ref{alg:stochastic-branch-and-bound-1} uses the same
basic idea, averaging the value function samples at every leaf node.

\begin{algorithm}[t]
  \caption{Stochastic branch and bound 1}
  \label{alg:stochastic-branch-and-bound-1}
  \begin{algorithmic}[1]
    \STATE Let $\CL_0$ be the root.
    \FOR {$n=1,2,\ldots$}
    \FOR {$\omega \in \CL_n$}
    \STATE $m_\omega{++}$,
    $\mu \sim \xi(\omega)$, 
    $\tv_{m_{\omega}}^\omega = V^*_\mu(s(\omega))$.
    \STATE $\hat{v}_U^\omega = \frac{1}{m_\omega}\sum_{i=1}^{m_\omega} \tv_i^\omega$
    \ENDFOR
    \STATE $\hat{\omega}^*_n = \argmax_\omega \hat{v}_U^\omega$.
    \STATE $\CL_{n+1} = \CC(\hat{\omega}^*_n) \cup \CL_n \backslash \hat{\omega}^*_n$
    \ENDFOR
  \end{algorithmic}
\end{algorithm}

In order to bound complexity, we need to bound the time required until
we discover a nearly optimal branch. We calculate the number of times
a suboptimal branch is expanded before its suboptimality is
discovered.  Similarly, we calculate the number of times we
shall sample the optimal node until its mean upper bound becomes
dominant.  These two results cover the time spent sampling upper
bounds of nodes in the optimal branch without expanding them and the
time spent expanding nodes in a sub-optimal branch.

\begin{lemma}
  If $N$ is the (random) number of samples $\tilde{v}_i$ from random
  variable $V \in [0,\beta]$ we must take until its empirical mean
  $\hat{V}_k \defn \sum_{i=1}^k \tilde{v}_i > \E V - \Delta$, then:
  \begin{align}
    \E[N] &\leq 1 + \beta^2\Delta^{-2}
    \label{eq:expected_leaf_samples}
    \\
    \Pr[N > n] &\leq \exp\left(-2\beta^{-2}n^2\Delta^2\right).
    \label{eq:probability_leaf_samples}
  \end{align}
  \label{lem:leaf_samples}
\end{lemma}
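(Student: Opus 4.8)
The plan is to reduce the entire statement to a single application of Hoeffding's inequality followed by a geometric summation. Write $\hat{V}_k = \frac{1}{k}\sum_{i=1}^k \tilde{v}_i$ for the empirical mean after $k$ i.i.d.\ draws, so that $N = \min\{k \geq 1 : \hat{V}_k > \E V - \Delta\}$ is a stopping time for the filtration generated by the samples. The first observation is that the tail event has a clean one-sided description: $N > n$ means the empirical mean failed to cross the threshold at \emph{every} step $k \le n$, so in particular it failed at $k = n$. This gives the inclusion $\{N > n\} \subseteq \{\hat{V}_n \le \E V - \Delta\}$, and hence $\Pr[N > n] \le \Pr[\hat{V}_n \le \E V - \Delta]$.

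Next I would bound this right-hand side directly. Since each $\tilde{v}_i \in [0,\beta]$ with common mean $\E V$, Hoeffding's inequality applied to the average of $n$ such variables yields $\Pr[\hat{V}_n \le \E V - \Delta] \le \exp(-2n\Delta^2/\beta^2)$, which is the tail bound \eqref{eq:probability_leaf_samples}: an exponential decay in $n$ (the exponent being linear in $n$ is exactly what the geometric sum below will need). The point to flag is that the dominant constraint is the single one at the terminal index $k = n$; the earlier constraints for $k < n$ can simply be discarded, since retaining them could only shrink the probability and they are not needed to reach the target rate.

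For the expectation I would use the identity $\E[N] = \sum_{n=0}^\infty \Pr[N > n]$, bound each term by the geometric tail just obtained, and sum. With $q = \exp(-2\Delta^2/\beta^2)$ this gives $\E[N] \le 1 + \sum_{n \ge 1} q^n = 1/(1-q)$, and the elementary inequality $1/(1 - e^{-x}) \le 1 + 1/x$ for $x > 0$ — which is just a repackaging of $e^x \ge 1 + x$ — applied with $x = 2\Delta^2/\beta^2$ produces $\E[N] \le 1 + \beta^2/(2\Delta^2) \le 1 + \beta^2\Delta^{-2}$, establishing \eqref{eq:expected_leaf_samples}.

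The only genuinely delicate point is resisting the temptation to control $\{N > n\}$ by a union or maximal inequality over all $k \le n$: because the empirical means $\hat{V}_1, \dots, \hat{V}_n$ are strongly dependent, any such approach is both awkward and wasteful. The clean move is to notice that one Hoeffding bound at the last index already decays fast enough that the geometric series converges to the claimed $O(\beta^2\Delta^{-2})$ scaling — which, incidentally, serves as a consistency check that the correct exponent in the tail is linear in $n$. Everything else is routine, and almost-sure finiteness of $N$ follows a posteriori from $\E[N] < \infty$, in agreement with the strong law $\hat{V}_k \to \E V > \E V - \Delta$.
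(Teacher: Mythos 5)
Your treatment of the expectation bound \eqref{eq:expected_leaf_samples} is correct, and it is arguably cleaner than the paper's: you use only the inclusion $\{N>n\}\subseteq\{\hat V_n \le \E V - \Delta\}$, one Hoeffding bound at the terminal index, and the geometric series, whereas the paper bounds $\Pr(N>n)$ by the \emph{product} of the per-index probabilities $\prod_{k\le n}\Pr(\hat V(k) \ge \E V+\Delta)$ and then controls $\sum_n n\,\rho^{n(n+1)}$ by an integral comparison. Your route also sidesteps the factorisation of a joint probability of strongly dependent events, which is indeed the delicate step in the paper's own argument.

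There is, however, a genuine gap in your proof of the tail bound \eqref{eq:probability_leaf_samples}. The lemma claims $\Pr[N>n]\le\exp\left(-2\beta^{-2}n^{2}\Delta^{2}\right)$, with $n^{2}$ in the exponent; your argument delivers only $\exp\left(-2\beta^{-2}n\Delta^{2}\right)$, linear in $n$. You assert that this ``is the tail bound \eqref{eq:probability_leaf_samples}''; it is not --- it is strictly weaker for every $n\ge 2$. The quadratic rate is exactly what the paper extracts from the product over all indices $k\le n$, via $\sum_{k=1}^{n}2k\beta^{-2}\Delta^{2}=\beta^{-2}\Delta^{2}n(n+1)\ge\beta^{-2}\Delta^{2}n^{2}$; that is precisely the ``over all $k\le n$'' move you explicitly decline to make, and retaining only the constraint at $k=n$ provably cannot yield a superlinear exponent from a single Hoeffding application. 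You are right to be suspicious of writing the probability of an intersection of dependent events as a product of marginals (for positively associated events the product is a lower bound, so the paper's step needs justification), but flagging that concern does not repair your bound: as written, your proposal proves a weaker statement than \eqref{eq:probability_leaf_samples}, and your closing ``consistency check'' that the correct exponent must be linear in $n$ is a non sequitur --- the fact that a linear exponent suffices for the expectation says nothing about the tail rate the lemma actually asserts and later leans on when counting expansions of near-optimal branches.
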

\begin{proof}The first inequality follows from the Hoeffding
inequality and an integral bound on the resulting sum, while the
second inequality is proven directly via a Hoeffding bound.
\end{proof}
By setting $\Delta$ to be the difference between the optimal and
second optimal branch, we can use the above lemma to bound the number
of times $N$ the leaf nodes in the optimal branch will be sampled
without being expanded.  The converse problem is bounding the number
of times that a suboptimal branch will be expanded.
\begin{lemma}
  If $b$ is a branch with $V^b = V^* - \Delta$, then it will be
  expanded at least to depth $k_0 = \log_\gamma \Delta/\beta$.
  Subsequently,
        \begin{equation}
      \Pr(K > k)
      <
      \CO\left(\exp\left\{
      -2\beta^{-2}
      \left[
      (k - k_0)\Delta^2
      \right]
      \right\}
      \right).
    \end{equation}
      \label{lem:suboptimal-branch-1}
\end{lemma}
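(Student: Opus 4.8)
The plan is to split the expansion of $b$ into an unavoidable deterministic part, down to depth $k_0$, and a stochastic tail beyond it. For the deterministic part I would invoke Ass.~\ref{ass:convergence} on the \emph{exact} bounds: since $\VUb(k) \le \Vb + \beta\gamma^k = V^* - \Delta + \beta\gamma^k$, whereas the optimal branch obeys $\VUbs(k) \ge V^*$, the two upper bounds are not separated as long as $\beta\gamma^k \ge \Delta$, i.e. for $k \le k_0 = \log_\gamma(\Delta/\beta)$. Thus even exact bounds fail to certify the suboptimality of $b$ before depth $k_0$, and the algorithm may (and in the worst case will) expand it that far, which gives the first assertion. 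For $k > k_0$ the same inequality yields a strictly positive separation $g_k \defn V^* - \VUb(k) \ge \Delta - \beta\gamma^k$, and since $\beta\gamma^{k_0}=\Delta$ we get $g_k \ge (1-\gamma)\Delta$ for every $k \ge k_0+1$; this constant-factor lower bound on the gap, proportional to $\Delta$, is what the tail estimate will exploit.

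For the tail I would express $\{K > k\}$ as a conjunction of ``selection mistakes''. To deepen $b$ from $k_0$ to $k$, the algorithm must, in $k-k_0$ distinct rounds, choose a leaf of $b$ as the $\argmax$ of the empirical upper bounds $\hat v_U^\omega$, and in particular make its estimate exceed that of a leaf of the optimal branch. Because every frontier leaf is resampled once per round (line~4 of Alg.~\ref{alg:stochastic-branch-and-bound-1}), by the time the $(k-k_0)$-th deepening occurs a competing optimal leaf has accumulated at least $k-k_0$ samples, so its empirical upper bound has concentrated around its true value, which is at least $V^*$. A mistaken selection therefore forces either that estimate below $V^* - g_k/2$ or the $b$-estimate above $V^* - g_k/2$, a deviation of at least $g_k/2 \ge (1-\gamma)\Delta/2$ from the respective mean.

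I would then bound each deviation with Hoeffding's inequality for means of variables in $[0,\beta]$, giving a per-event probability of order $\exp(-2\beta^{-2}(k-k_0)(g_k/2)^2)$; since $g_k = \Theta(\Delta)$ for $k>k_0$, the exponent is $-\Theta(\beta^{-2}(k-k_0)\Delta^2)$, and absorbing the multiplicative constants together with a union bound over the at most $\phi$ competing siblings into the $\CO(\cdot)$ yields $\Pr(K>k) < \CO(\exp\{-2\beta^{-2}(k-k_0)\Delta^2\})$, matching the claim up to the constant hidden in $\CO$; the linear dependence on $k-k_0$ comes from a one-shot Hoeffding bound with $m \ge k-k_0$ samples, while Lem.~\ref{lem:leaf_samples} supplies the complementary control on the optimal branch. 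The hard part will be the adaptivity of the sampling: the sample counts and the identity of the expanded node are not independent, so the clean ``$k-k_0$ samples of the competing optimal leaf'' count must be justified against the sequential, data-dependent choices of the algorithm — in particular against the possibility that the optimal branch is itself expanded, changing which optimal leaf competes and resetting its sample count. A secondary subtlety is that the expansion rule compares raw leaf estimates $\hat v_U^\omega$ rather than the backwards-induction branch bounds $\VUb(k)$, so I would first argue, via Ass.~\ref{ass:convergence}, that the relevant leaf-level gap is still controlled by $g_k$ before the concentration step is applied.
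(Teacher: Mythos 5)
Your deterministic part matches the paper exactly: Ass.~\ref{ass:convergence} gives $\VUb(k) \leq \Vb + \beta\gamma^k = V^* - \Delta + \beta\gamma^k$, while in the worst case $\VUbs(k) = V^*$ for all $k$, so the two upper bounds cannot separate before $\beta\gamma^k < \Delta$, i.e.\ before depth $k_0 = \log_\gamma(\Delta/\beta)$. The tail is where you diverge, and where the gap is. The paper does \emph{not} extract the factor $(k-k_0)$ in the exponent from $k-k_0$ accumulated samples of a competing optimal leaf. It restricts attention to the degenerate single-path sub-branch of $b$, observes that to deepen $b$ past depth $j$ the fresh sample at the current depth-$j$ leaf must exceed its mean by $\Delta - \beta\gamma^j$, bounds each such event by a one-sample Hoeffding term $\exp\bigl(-2\beta^{-2}(\Delta - \beta\gamma^j)^2\bigr)$, and multiplies over $j = k_0,\ldots,k$ --- legitimate because each deepening creates a new node whose value samples are independent of everything drawn earlier. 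Summing the exponents gives $2\beta^{-2}\sum_{j=k_0}^{k}(\Delta-\beta\gamma^j)^2 = 2\beta^{-2}(k-k_0)\Delta^2$ minus a bounded (geometric-series) correction, which is what the $\CO(\cdot)$ absorbs. This route never needs to know how many samples the optimal branch's frontier leaf holds, so the adaptivity problem you flag simply does not arise.

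Two concrete defects remain in your version. First, the difficulty you label ``the hard part'' is not a loose end but the crux, and your argument as stated does not survive it: nothing prevents the algorithm from expanding the optimal branch between deepenings of $b$, in which case the competing optimal leaf at the time of the $(k-k_0)$-th mistake may hold only one sample, and your one-shot Hoeffding bound with $m \geq k-k_0$ collapses to $m=1$, losing the entire $(k-k_0)$ factor. The same objection applies to the $b$-side of your decomposition: the depth-$j$ leaf of $b$ is created one round before it can be selected, so it has only a constant number of samples when the mistaken selection occurs. Second, even where your concentration step applies, your exponent is $-2\beta^{-2}(k-k_0)(g_k/2)^2$ with $g_k \geq (1-\gamma)\Delta$, i.e.\ leading constant $(1-\gamma)^2/2$ rather than $2$; a smaller constant in an exponent cannot be ``absorbed into the $\CO(\cdot)$'', since $\exp(-c_1 n)$ is not $\CO(\exp(-c_2 n))$ for $c_1 < c_2$, so your conclusion is strictly weaker than the stated bound. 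The repair is to replace the accumulated-sample argument by the paper's product of per-depth, fresh-sample deviation events on the suboptimal branch itself.
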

\begin{proof} In the worst case, the branch is degenerate and only one
  leaf has non-zero probability. We then apply a Hoeffding bound to
  obtain the desired result.
\end{proof}

\subsection{Stochastic Branch and Bound 2}
\begin{algorithm}[t]
  \caption{Stochastic branch and bound 2}
  \label{alg:stochastic-branch-and-bound-2}
  \begin{algorithmic}[1]
    \FOR {$\omega \in \CL_{n}$}
    \STATE $\hat{V}_U^\omega = \frac{1}{\sum_{\omega' \in \CC(\omega)} m_{\omega'}} \sum_{\omega' \in \CC(\omega)} \sum_{i=1}^{m_\omega'} \tv_i^{\omega'}$
    \ENDFOR
    \STATE Use \eqref{eq:backwards_induction} to obtain $\hVU$ for all nodes.
    \STATE Set $\omega_0$ to root.
    \FOR {$d=1,\ldots$}
    \STATE $a^*_d = \argmax_a \sum_{\omega \in \Omega_d} \omega_{d-1}(j|a) \hVU(\omega)$
    \STATE $\omega_d \sim \omega_{d-1}(j|a^*_d)$
    \IF{$\omega_d \in \CL_n$}
    \STATE $\CL_{n+1} = \CC(\omega_d) \cup \CL_n \backslash \omega_d$
    \STATE {\bf Break}
    \ENDIF
    \ENDFOR
  \end{algorithmic}
\end{algorithm}
The degeneracy is the main problem of
Alg.~\ref{alg:stochastic-branch-and-bound-1}.
Alg.~\ref{alg:stochastic-branch-and-bound-2} not only propagates upper
bounds from multiple leaf nodes to the root, but also re-uses upper
bound samples from inner nodes, in order to handle the degenerate case
where only one path has non-zero probability. (Nevertheless,
Lemma~\ref{lem:leaf_samples} applies without modification to
Alg.~\ref{alg:stochastic-branch-and-bound-2}). Because we are no
longer operating on leaf nodes, we can take advantage of the upper
bound samples collected along a given trajectory. However, if we use
all of the upper bounds along a branch, then the early samples may
bias our estimates a lot.  For this reason, if a leaf is at depth $k$,
we only average the upper bounds along the branch to depth $k/2$.
The complexity of this approach is given by the following lemma:
\begin{lemma}
  If $b$ is s.t. $V^b = V^* - \Delta$, it will be expanded to depth
  $k_0 > \log_\gamma \Delta/\beta$ and
  \begin{align*}
    \Pr(K > k)
    &\lessapprox
    \exp\left(
      -2(k-k_0)^2(1-\gamma^2)
    \right),
    & k &> k_0
  \end{align*}
  \label{lem:suboptimal-branch-2}
\end{lemma}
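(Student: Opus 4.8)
The plan is to mirror the two-part structure of Lemma~\ref{lem:suboptimal-branch-1}: the deterministic expansion floor $k_0$ carries over verbatim, and only the tail requires a new concentration argument adapted to the averaged estimator of Alg.~\ref{alg:stochastic-branch-and-bound-2}. For the first part I would invoke Assumption~\ref{ass:convergence} exactly as before: $\VUb(k) \le \Vb + \beta\gamma^k = V^* - \Delta + \beta\gamma^k$, which stays at or above $V^*$ while $\beta\gamma^k \ge \Delta$, i.e. for $k \le \log_\gamma(\Delta/\beta)$. Thus no sound pruning rule can eliminate $b$ before depth $k_0 > \log_\gamma(\Delta/\beta)$, and in the worst case $b$ is expanded at least to $k_0$.

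For the tail I would again reduce to the degenerate worst case of Lemma~\ref{lem:suboptimal-branch-1}, where the branch mass sits on a single path $\omega_{k_0},\dots,\omega_k$, and then exploit the fact that Alg.~\ref{alg:stochastic-branch-and-bound-2} no longer reads off the value from the deepest leaf alone. Reusing the inner-node samples, its backwards induction effectively forms, for a leaf at depth $k$, the average over $j$ of the bootstrapped estimates that anchor on the value sample $\tilde v^{\omega_j}=V^*_\mu(s(\omega_j))$ at depth $j$, with $j$ ranging over the deeper half $k/2 \le j \le k$; truncating at $k/2$ keeps the bias of the shallowest anchor below $\beta\gamma^{k/2}$. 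Discounting the depth-$j$ sample back to the branch root scales it by $\gamma^{j-k_0}$, so the fluctuation of the combined estimate is $\hat U - \E\hat U = \tfrac1M\sum_j \gamma^{j-k_0}\big(\tilde v^{\omega_j}-\E\tilde v^{\omega_j}\big)$, where $M = \Theta(k-k_0)$ is the number of averaged anchors. The decisive computation is that the sum of squared weights is geometric in $\gamma^2$, $\sum_j \gamma^{2(j-k_0)} \le (1-\gamma^2)^{-1}$, which is the sole source of the factor $(1-\gamma^2)$.

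I would then apply Hoeffding's inequality for weighted sums of independent bounded samples to $\hat U - \E\hat U$. Since the true upper bound at depth $k$ lies below $V^*$ by at least $\Delta(1-\gamma^{k-k_0})$, the branch survives only if $\hat U$ exceeds its mean by this essentially constant margin; the averaging contributes a factor $M^2 = \Theta((k-k_0)^2)$ and the geometric variance proxy contributes $(1-\gamma^2)$, so the exponent collects to $-2(k-k_0)^2(1-\gamma^2)$, with the $\Delta^2/\beta^2$ constants absorbed into the $\lessapprox$. This quadratic dependence on $k-k_0$, as opposed to the linear dependence in Lemma~\ref{lem:suboptimal-branch-1}, is precisely the pay-off of reusing the inner-node samples.

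The main obstacle I anticipate is not the inequality itself but verifying its hypotheses. The inner-node value samples are reused across many iterations and the bootstrapped returns entering $\hat U$ share a common reward prefix, so they are not literally independent; one must condition on the realised path and invoke the independence of the fresh draws $\mu \sim \xi(\omega_j)$ at distinct nodes --- exactly the structure behind Lemma~\ref{lem:leaf_samples}, which applies here unchanged --- to treat the discounted value samples as independent and bounded. A secondary subtlety is the bookkeeping around the $k/2$ truncation: one must confirm that its bias stays below the surviving margin and that counting anchors as $\Theta(k-k_0)$ rather than $k/2$ affects only the constants hidden by $\lessapprox$, not the leading rate.
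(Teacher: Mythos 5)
Your proposal follows essentially the same route as the paper's proof: the same deterministic floor $k_0$ from Assumption~\ref{ass:convergence}, the same reduction to the degenerate single-path case, the same averaging of the bounds over depths $\lceil k/2\rceil$ to $k$ with bias controlled against the margin $\Delta$, and the same weighted Hoeffding bound in which the squared ranges $\sum_n(\beta\gamma^n)^2 \le \Delta^2/(1-\gamma^2)$ supply the $(1-\gamma^2)$ factor and the $\Theta(k-k_0)$ averaged terms supply the $(k-k_0)^2$. Your identification of the independence and $k/2$-truncation caveats matches the level of rigour of the original argument, so no substantive gap remains.
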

\begin{proof} There is a degenerate case where only one sub-branch has
  non-zero probability.  However we now re-use the samples that were
  obtained at previous expansions, thus allowing us to upper bound the
  bias by $\frac{\Delta (1-\gamma^{k+1})}{(k-k_0)(1-\gamma)}$. This
  allows to use a tighter Hoeffding bound and so obtain the desired
  outcome.
\end{proof}
This bound decreases faster with $k$. Furthermore, there is no
dependence on $\Delta$ after the initial transitory period, which may
however be very long.  The gain is due to the fact that we are
re-using the upper bounds previously obtained in inner nodes.  Thus,
this algorithm should be particularly suitable for stochastic problems.


\subsection{Lower Bounds for Bayesian RL}

We can reduce the branching factor $\phi$, (which is $|\CA \times \CS
\times \CR|$ for a full search) by employing sparse sampling
methods~\citep{DBLP:conf/ijcai/KearnsMN99} to
$\CO\{|\CA|\exp[1/(1-\gamma)]\}$. This was essentially the approach
employed by~\citep{wang:bayesian-sparse-sampling:icml:2005}.  However,
our main focus here is to reduce the depth to which each branch is
searched.  

The main problem with the above algorithms is the fact that we must
reach $k_0 = \lceil \log_\gamma \Delta \rceil$ to discard
$\Delta$-optimal branches. 
However, since the hyper-state $\omega_t$ arises from a Bayesian
belief, we can use an additional smoothness property:
\begin{lemma}
  The Dirichlet parameter sequence $\psi_{t}/n_t$, with $n_t \defn
  \sum_{i=1}^K\psi^i_t$, is a $c$-Lipschitz martingale with $c_t =
  1/2(n_t+1)$.
  \label{lem:dirichlet-martingale}
\end{lemma}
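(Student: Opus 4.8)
The plan is to view $m_t \defn \psi_t/n_t$ as the vector of posterior means of the underlying multinomial parameter and to work under the belief's own \emph{predictive} law, i.e. the measure under which the category $j$ observed at step $t{+}1$ is drawn with probability $\Exi[\theta^j] = \psi_t^j/n_t$. The single fact driving the argument is the conjugacy identity for $\Dirichlet(\psi_t)$: the predictive probability of category $i$ is exactly $m_t^i = \psi_t^i/n_t$, and one observation of category $j$ updates the parameters by $\psi_{t+1}^i = \psi_t^i + \ind{i=j}$ and $n_{t+1} = n_t + 1$. First I would record these update rules together with the filtration $\CF_t$ generated by the observations up to time $t$.

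For the martingale claim I would compute the one-step conditional expectation coordinatewise. Since category $i$ is observed with predictive probability $\psi_t^i/n_t$,
\[
\E[m_{t+1}^i \given \CF_t]
= \frac{\psi_t^i + \psi_t^i/n_t}{n_t+1}
= \frac{\psi_t^i}{n_t}
= m_t^i ,
\]
so each coordinate, and hence the vector $m_t$, is a martingale. This is really an instance of the general fact that posterior expectations of a fixed quantity form a Doob martingale under the predictive measure; I would carry out the explicit computation because it simultaneously delivers the increment bound.

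For the Lipschitz constant I would enumerate the only two values the increment $m_{t+1}^i - m_t^i$ can take: it equals $\frac{n_t - \psi_t^i}{n_t(n_t+1)}$ when category $i$ is the one observed, and $-\frac{\psi_t^i}{n_t(n_t+1)}$ otherwise. These endpoints span an interval of width $\frac{1}{n_t+1}$, which is notably independent both of the coordinate $i$ and of which category is drawn. Consequently the bounded martingale difference has half-range $c_t = \frac{1}{2(n_t+1)}$, which is the constant appearing in the Hoeffding/Azuma condition and thus the asserted Lipschitz constant.

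I do not expect a deep obstacle here; the only two points requiring care are conceptual. First, one must work under the predictive (subjective) measure: under the true data-generating MDP the means $m_t$ would instead converge to the true parameter and would not be a martingale, so the martingale property is special to the Bayesian filtration. Second, one must match the factor of two, reading ``$c$-Lipschitz'' as the half-range (sub-Gaussian variance-proxy) constant of the increment rather than the cruder absolute bound $\frac{1}{n_t+1}$; the fact that the width is exactly $\frac{1}{n_t+1}$ regardless of $\psi_t^i$ is what makes this constant clean and uniform across coordinates.
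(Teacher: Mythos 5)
Your proof is correct and follows essentially the same route as the paper: the identical one-step posterior-mean computation $\E[\psi_{t+1}^i/n_{t+1}\mid\xi_t]=[\psi_t^i+\psi_t^i/n_t]/(n_t+1)=\psi_t^i/n_t$ for the martingale property, and the same Dirichlet update rules for the increment bound. Your way of extracting the factor of two --- reading $c_t$ as the half-range of the two-valued increment, whose interval has width exactly $1/(n_t+1)$ independently of $\psi_t^i$ --- is if anything a cleaner version of the paper's step, which bounds the deviation by $\min\{\psi_t^i,\,n_t-\psi_t^i\}/[n_t(n_t+1)]$ and ``equates the two terms''; that step only gives $1/2(n_t+1)$ as a worst case over $\psi_t^i$ (i.e.\ the half-range), not as a pointwise absolute bound, so your explicit half-range reading is the right way to make the constant rigorous.
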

\begin{proof} Simple calculations show that, no matter what is
  observed, $\E_{\xi_t}(\psi_{t+1}/n_{t+1}) = \psi_t / n_t$.  Then, we
  bound the difference $|\psi_{t+k} / n_{t+k} - \psi_t / n_t|$ by two
  different bounds, which we equate to obtain $c_t$.
\end{proof}
\begin{lemma}
  If $\mu$, $\hat{\mu}$ are such that $\|\CT - \hat{\CT}\|_\infty \leq
  \epsilon$ and $\|r - \hat{r}\|_\infty \leq \epsilon$, for some
  $\epsilon > 0$, then
  $\inftynorm{V^\pi - \hat{V}^\pi} \leq \frac{\epsilon}{(1-\gamma)^2}$,
  for any policy $\pi$.
  \label{lem:value_function_error}
\end{lemma}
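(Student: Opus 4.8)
The plan is to treat each fixed policy $\pi$ as inducing a Markov reward process, which reduces the claim to a standard perturbation bound for the fixed point of a $\gamma$-contraction. For a fixed stationary $\pi$, write $r^\pi(s) \defn \sum_a \pi(a|s) r(s,a)$ and $\CT^\pi(s'|s) \defn \sum_a \pi(a|s)\CT(s'|s,a)$, so that $V^\pi$ is the unique fixed point of $V^\pi(s) = r^\pi(s) + \gamma\sum_{s'}\CT^\pi(s'|s)V^\pi(s')$, and analogously $\hat{V}^\pi$ is the fixed point of the equation built from $\hat{r},\hat{\CT}$. Averaging over actions preserves the hypotheses: $\inftynorm{r^\pi - \hat{r}^\pi} \le \epsilon$ by convexity, and $\CT^\pi,\hat{\CT}^\pi$ inherit the assumed closeness. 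Since rewards lie in $[0,1]$ (as in the Remark after Ass.~\ref{ass:convergence}), both value functions satisfy $\inftynorm{\hat{V}^\pi} \le 1/(1-\gamma)$. For a possibly non-stationary $\pi$ the same computation goes through with time-indexed quantities and a supremum taken over both state and time, so I will not restrict to the stationary case.

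First I would subtract the two fixed-point equations and insert a cross term, writing the difference at each state $s$ as a sum of three pieces: the reward gap $r^\pi(s)-\hat{r}^\pi(s)$; the term $\gamma\sum_{s'}\CT^\pi(s'|s)[V^\pi(s')-\hat{V}^\pi(s')]$, which carries the recursion; and the transition-perturbation term $\gamma\sum_{s'}[\CT^\pi(s'|s)-\hat{\CT}^\pi(s'|s)]\hat{V}^\pi(s')$. Writing $D \defn \inftynorm{V^\pi - \hat{V}^\pi}$ and taking the supremum over $s$ with the triangle inequality, the reward gap contributes at most $\epsilon$, and the recursion term contributes at most $\gamma D$ because $\CT^\pi(\cdot|s)$ is a probability distribution.

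The main work is bounding the transition-perturbation term. The key observation is that $\sum_{s'}[\CT^\pi(s'|s)-\hat{\CT}^\pi(s'|s)]=0$, so this term pairs the bounded function $\hat{V}^\pi$ against a signed measure of total mass zero; combining the closeness hypothesis on the kernels with $\inftynorm{\hat{V}^\pi}\le 1/(1-\gamma)$ bounds it by $\gamma\epsilon/(1-\gamma)$. This is the step where the intended reading of $\inftynorm{\CT-\hat{\CT}}\le\epsilon$ as a total-variation (i.e.\ $L_1$ over next states) bound matters: under an entrywise interpretation a spurious $|\CS|$ factor would appear, and the mass-zero observation is exactly what lets one avoid it (and, if desired, replace $\inftynorm{\hat{V}^\pi}$ by the span of $\hat{V}^\pi$ for a sharper constant). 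I expect this transition term to be the only real obstacle, since getting the constant right requires simultaneously using that the kernels are close and that $\hat{V}^\pi$ is uniformly bounded.

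Collecting the three bounds yields the self-referential inequality $D \le \epsilon + \gamma D + \gamma\epsilon/(1-\gamma)$. Solving for $D$ gives $(1-\gamma)D \le \epsilon + \gamma\epsilon/(1-\gamma) = \epsilon/(1-\gamma)$, hence $D \le \epsilon/(1-\gamma)^2$, which is exactly the claimed bound, uniformly over $\pi$.
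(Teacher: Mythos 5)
Your proposal is correct and follows essentially the same route as the paper's proof: subtract the two Bellman fixed-point equations, insert the cross term $P V - \hat{P}\hat{V} = P(V-\hat{V}) + (P-\hat{P})\hat{V}$, bound the three pieces by $\epsilon$, $\gamma\inftynorm{V-\hat{V}}$ and $\gamma\epsilon/(1-\gamma)$ respectively, and solve the resulting self-referential inequality. The only cosmetic difference is your mass-zero remark about $\CT^\pi-\hat{\CT}^\pi$; the paper instead just reads $\inftynorm{\cdot}$ as the induced (max row-$L_1$) operator norm and applies $\|\tilde{P}\hat{V}\|_\infty \leq \|\tilde{P}\|_\infty \|\hat{V}\|_\infty$ directly, which gives the same constant.
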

\begin{proof}
  By subtracting the Bellman equations for $V, \hat{V}$ and taking the
  norm, we can repeatedly apply Cauchy-Schwarz and triangle
  inequalities to obtain the desired result.
\end{proof}
The above results help us obtain better lower bounds in two
ways. First we note that initially $1/k$ converges faster than
$\gamma^k$, for large $\gamma$, thus we should be able to expand less
deeply.  Later, $n_t$ is large so we can sample even more sparely.

If we search to depth $k$, and the rewards are in $[0,1]$, then,
naively, our error is bounded by $\sum_{n=k}^\infty \gamma^n =
\gamma^k / (1-\gamma)$.  However, the mean MDPs for $n > k$ are close
to the mean MDP at $k$ due to Lem.~\ref{lem:dirichlet-martingale}.
This means that $\beta$ can be significantly smaller than
$1/(1-\gamma)$.  In fact, the total error is bounded by
$\sum_{n=k}^\infty \gamma^n (n-k)/n$.  For undiscounted problems, our
error is bounded by $T-k$ in the original case and by
$T-k[1+\log(T/k)]$ when taking into account the smoothness.

\section{\SectStyle{Conclusions and related work}}
\label{sec:related_work}
Much recent work on Bayesian RL focused on myopic estimates or full
expansion of the belief tree up to a certain depth.  Exceptions
include~\citep{poupart2006asd}, which uses an analytical bound based
on sampling a small set of beliefs
and~\citep{wang:bayesian-sparse-sampling:icml:2005}, which uses
Kearn's sparse sampling algorithm~\citep{DBLP:conf/ijcai/KearnsMN99}
to expand the tree.  Both methods have complexity exponential in the
horizon, something which we improve via the use of smoothness
properties induced by the Bayesian updating.

There are also connections with work on POMDPs
problems~\citep{RossPineau:OnlinePlanningPOMDPs:jmlr2008}. However
this setting, though equivalent in an abstract sense, is not
sufficiently close to the one we consider.  Results on bandit
problems, employing the same value function bounds used herein were
reported in~\citep{dimitrakakis:cimca08}, which experimentally
compared algorithms operating on leaf nodes only.

Related results on the online sample complexity of Bayesian RL were
developed by \citep{Kolter-Ng:NearBayesianExploration}, who employs a
different upper bound to ours and \citep{Asmuth:BOSS}, who employs MDP
samples to plan in an augmented MDP space, similarly to
\cite{DBLP:conf/nips/AuerJO08} (who consider the set of plausible
MDPs) and uses Bayesian concentration of measure
results~\citep{Zhang:EntropyDensityEstimation} to prove mistake bounds
on the online performance of the algorithm.

Interestingly, Alg.~\ref{alg:stochastic-branch-and-bound-2} resembles
HOO~\citep{DBLP:conf/nips/BubeckMSS08} in the way that it traverses
the tree, with two major differences.
\begin{inparaenum}[(a)]
\item The search is adapted to {\em stochastic} trees.  
\item We use means of samples of upper bounds, rather than upper
  bounds on sample means.
\end{inparaenum}
For these reasons, we are unable to simply restate the arguments
in~\citep{DBLP:conf/nips/BubeckMSS08}.  

We presented complexity results and counting arguments for a number of
tree search algorithms on trees where stochastic upper and lower
bounds satisfying a smoothness property exist. These are the first
results of this type and partially extend the results
of~\citep{Norkin:StochasticBnB:MP96}, which provided an asymptotic
convergence proof, under similar smoothness conditions, for a
stochastic branch and bound algorithm.  In addition, we introduce a
mechanism to utilise samples obtained at inner nodes when calculating
mean upper bounds at leaf nodes.  Finally, we relate our complexity
results to those of~\citep{DBLP:conf/ijcai/KearnsMN99}, for whose
lower bound we provide a small improvement.  We plan to address the
online sample complexity of the proposed algorithms, as well as their
practical performance, in future work.

\section*{\SectStyle{Acknowledgements}}
This work was part of the ICIS project, supported by the Dutch
Ministry of Economic Affairs, grant nr: BSIK03024. I would also like
to thank the anonymous reviewers for their detailed reviews of earlier
versions of this paper; Peter Auer, Peter Gr\"{u}nwald, Ronald Ortner
and Remi Munos for extensive discussions; and finally Shimon Whiteson
and Frans Groen for further comments and corrections.

\appendix
\section{Proofs of the main results}
\label{sec:proofs}
\iftrue
\begin{proof}[Proposition~\ref{prop:value_bounds}]
  By definition, $V^*(\omega) \geq V^\pi(\omega)$ for all $\omega$,
  for any policy $\pi$.  The lower bound follows trivially, since
  \begin{equation}
    \label{eq:value_lower_bound}
    V^{\pi^*(\hat{\mu}_\omega)}(\omega) \defn \int
    V_\mu^{\pi^*(\hat{\mu}_\omega)}(s_\omega) \xi_\omega(\mu) \, d\mu.
  \end{equation}
  The upper bound is derived as follows.  First note that for any
  function $f$, $\max_x \int f(x, u) \,du \leq \int \max_x f(x, u)
  \,du$.  Then, we remark that:
  \begin{subequations}
    \begin{align}
      V^*(\omega)
      &=
      \max_\pi \int V_\mu^{\pi}(s_\omega) \xi_\omega(\mu) \, d\mu
      \label{eq:value_star_omega}
      \\
      &\leq
      \int \max_\pi V_\mu^{\pi}(s_\omega) \xi_\omega(\mu) \, d\mu 
      \label{eq:expected_max_value}
      \\
      &=
      \int V_\mu^{\pi^*(\mu)}(s_\omega) \xi_\omega(\mu) \, d\mu.
      \label{eq:expected_max_policy_value}
    \end{align}
  \end{subequations}
\lqed
\end{proof}
\fi
\begin{proof}[Lemma~\ref{lem:flat_oracle}]
  For any $b'$ with $\VLbp < \VLb$, we have: $\Vbp \leq \VLbp +
  \beta\gamma^k < \VLb + \beta\gamma^k \leq \Vb + \beta\gamma^k$.
  This holds for $b = \hbs$.  Thus, in the worst case, the regret that
  we suffer if there exists some $b' : \Vbp > \Vhbs$ is
  $\epsilon = \Vbp - \Vhbs < \beta \gamma^k$.
  To reach depth $k$ in all branches we need $n = \sum_{t=1}^{k}
  \phi^k < \phi^{k+1}$ expansions.  Thus, we require $k=
  \frac{\log(\epsilon/\beta)}{\log \gamma}$ and $n \leq
  \phi^{1 + \log_\gamma(\epsilon/\beta)}$.
\lqed
\end{proof}
\iftrue
\begin{proof}[Lemma~\ref{lem:flat_stochastic_search}]
  The total number of samples is $km$, the number of leaf nodes times
  the number of samples at each leaf node.  The search is until depth 
  \begin{align}
    k
    &= \Lceil\log_\gamma \epsilon/2\beta\Rceil
    \leq 1 + \log_\gamma \epsilon/2\beta
  \end{align}
  and the number of samples is 
  \begin{align}
    m = 2 \log_\gamma(\epsilon/2\beta) \log\phi.
  \end{align}
  The complexity follows trivially.  Now we must prove that this
  bounds the expected regret with $\epsilon$.  Note that
  $\beta\gamma^k < \epsilon/2$, so for all branches $b$:
  \begin{align}
    \hVLb - \Vb &< \epsilon/2.
  \end{align}
  The expected regret can now be written as
  \begin{align}
    \E R &\leq 
    \frac{\epsilon}{2}
    + \E[R | \hat{V}_L^{\hbs} < \hat{V}_L^{b^*} + \epsilon/4]
    \Pr(\hat{V}_L^{\hbs} < \hat{V}_L^{b^*} + \epsilon/4)
    \\
    &+ \E[R | \hat{V}_L^{\hbs} \geq \hat{V}_L^{b^*} + \epsilon/4]
    \Pr(\hat{V}_L^{\hbs} \geq \hat{V}_L^{b^*} + \epsilon/4).
  \end{align}
  From the Hoeffding bound \eqref{eq:hoeffding}
  \[
  \Pr(\hVL - \VL > \epsilon/4) 
  <
  \exp\left(-\frac{1}{8}m\beta^{-2}\gamma^{-2k}\epsilon^2\right)
  \]
  and with a union bound the total error probability is bounded by
  $\phi^k
  \exp\left(-\frac{1}{8}m\beta^{-2}\gamma^{-2k}\epsilon^2\right)$.  If
  our estimates are within $\epsilon/4$ then the sample regret is
  bounded by $\epsilon/4$, while the other terms are trivially bounded
  by 1, to obtain
  \begin{equation}
    \E R \leq 
    \frac{\epsilon}{2}
    + \left\{
      \phi^k \exp\left(-\frac{1}{8}m\beta^{-2}\gamma^{-2k}\epsilon^2\right)+
      \frac{\epsilon}{4}.
    \right\}
  \end{equation}
  Substituting $m$ and $k$, we obtain the stated result.
\lqed
\end{proof}
\fi

\begin{proof}[Lemma~\ref{lem:leaf_samples}]
  \begin{align}
    \E[N]
    &= 
    \sum_{n=1}^\infty n \prod_{j=1}^{n-1} \Pr(\hV(j) \geq V + \epsilon)
    \Pr(\hV(n) < V + \epsilon)
    \\
    &\leq
    \sum_{n=1}^\infty n \exp\left(-2\beta^{-2}\epsilon^2\sum_{j=1}^{n-1}j\right)
    =
    \sum_{n=1}^\infty n \exp\left(-\beta^{-2}\epsilon^2n(n+1)\right)
  \end{align}
  Let us now set $\rho = \exp(-\beta^{-2}\epsilon^2)$.  Observe that
  $n\rho^{n(n+1)} < n\rho^{n^2}$, since $\rho < 1$. Then, note that
  $\int n\rho^{n^2} \, dn = \CO\left(\frac{\rho^{n^2}}{2\log
      \rho}\right)$.  So we can bound the sum by
  \begin{align}
    \sum_{n=1}^\infty n \rho^{n(n+1)}
    &< 1 + \left[\frac{\rho^{n^2}}{2\log \rho}\right]_1^\infty
    1 + \frac{\exp(-\beta^{-2}\epsilon^2)}{2\beta^{-2}\epsilon^2} <  1 + \left(\frac{\beta}{\epsilon}\right)^2.
  \end{align}
  This proves the first inequality.  For the second inequality, we have:
  \begin{align}
    \Pr(N > n) &= \Pr\left(\bigwedge_{k=1}^n \hat{V}(k) > V + \epsilon\right)
    < \prod_{k=1}^n \exp\left(-2k\beta^{-2}\epsilon^2\right)
    \\
    &= \exp\left(-\beta^{-2}\epsilon^2n(n+1)\right)
    < \exp\left(-n^2\beta^{-2}\epsilon^2\right).
  \end{align}
  This completes the proof for the first case.  The second case ise
  symmetric.
\lqed
\end{proof}

\begin{proof}[Lemma~\ref{lem:suboptimal-branch-1}]
  In order to stop expanding a sub-optimal branch $b$,
  at depth $k$, we must have $V_U^b(k) < V^*$, since in the worst case
  $V_U^*(k) = V^*$ for all $k$. Since $V^b = V^* - \Delta$, this only
  happens when $k$ is greater than
  $k_0 \defn \Lceil \log_\gamma \Delta/\beta \Rceil$,
  which is the minimum depth we must expand to.  Subsequently, we
  shall note that the probability of stopping is $\Pr(\hVUb(k) >
  \Delta - \beta\gamma^k) < \exp(-2(\Delta -
  \beta\gamma^k)^2\beta^{-2})$.  We can not do better due to the
  degenerate case where only one leaf node of the branch has non-zero
  probability.

  The probability of not stopping at depth $k$ is bounded by:
  \begin{align*}
    \Pr(K > k)
    &\leq
    \prod_{j=k_0}^k \exp(-2(\Delta - \beta\gamma^j)^2\beta^{-2})
    \leq
    \exp\left(-2\beta^{-2}\sum_{j=k_0}^k(\Delta - \beta\gamma^j)^2\right)
    \\
    &\leq
    \exp\left[ -\frac{2}{\beta^2}\left((k-k_0)\Delta^2 + \frac{\beta h}{1 - \gamma^2} \right) \right],
    \\
    h&= \beta(\gamma^{2k_0} - \gamma^{2(k+1)} - 2\Delta(\gamma^{k_0} - \gamma^{k+1})(1+\gamma)
    \\
    &= \beta(\Delta^2 - \gamma^{2(k+1)} - 2\Delta(\Delta - \gamma^{k+1})(1+\gamma).
  \end{align*}
\lqed
\end{proof}

\begin{proof}[Lemma~\ref{lem:suboptimal-branch-2}]
  Similarly to the previous lemma, there is a degenerate case where
  only one sub-branch has non-zero probability.  However this
  algorithm re-uses the samples that were obtained at previous
  expansions.
  When at depth $k$, we average the bounds from $\Lceil k/2
  \Rceil$ to $k$. Since, in the worst case, we cannot stop until $k >
  k_0 = \Lceil \log_\gamma \Delta / \beta \Rceil$, we shall bound the
  probability that we stop at some depth $K > 2k_0$.  Then the mean
  upper bound bias is at most:
  \[
  h_k \defn \frac{1}{k-k_0} \sum_{n=k_0}^k \beta \gamma^n
  =
  \frac{\beta \gamma^{k_0}}{k-k_0} \frac{1-\gamma^{k+1}}{1-\gamma}
  <
  \frac{\Delta}{k-k_0} \frac{1-\gamma^{k+1}}{1-\gamma}.
  \]
  The procedure continues only if the sampling error exceeds $\Delta
  - h_k$, so it suffices to bound $\Pr(\hat{X}_k > \bar{X}_k +
  \epsilon)$, where $\hat{X}_{k} = \sum_{n=\Lceil k/2 \Rceil}^k
  \hVU(k)$ and $\bar{X}_{k} = V + h_k$ for $\epsilon = \Delta(1 -
  \frac{1-\gamma^k}{(k-k_0)(1-\gamma)})$:
  $\Pr(\hat{X}_k > \bar{X}_k + \epsilon) <
  \exp\left(-\frac{2(k-k_0)^2\epsilon^2}{\sum_{n=k_0}^k
      (\beta\gamma^{n})^2}\right)$.
  Since $\sum_{n=k_0}^k (\beta \gamma^n)^2 = 
  \Delta^2 \frac{1-\gamma^{2(k+1)}}{1-\gamma^2}$:
  $\Pr(\hat{X}_k > \bar{X}_k + \epsilon) <
  \exp\left(
    -\frac{2(k-k_0)^2(1-\gamma^2)\epsilon^2}
    {\Delta^2(1-\gamma^{2(k+1)})}
  \right)$.
  By setting $\epsilon = \Delta - h_k$ we can bound this  by
  \begin{align*}
    \exp\left(
      -\frac{2(k-k_0)^2(1-\gamma^2)}
      {(1-\gamma^{2(k+1)})}
      \cdot
      \left(1 - \frac{1-\gamma^{k+1}}{(k-k_0)(1-\gamma)}\right)^2
    \right).
  \end{align*}
  For large $k$, this is approximately $\CO(\exp(-k^2))$.  \lqed
\end{proof}

\iftrue
\begin{proof}[Lemma~\ref{lem:dirichlet-martingale}]
  It is easy to see that $\E(\psi_{t+1}/n_{t+1}|\xit) = \psi_t/n_t$.
  This follows trivially when no observations are made since
  $\psi_{t+1} = \psi_t$.  When one observation is made, $n_{t+1} = 1 +
  n_t$. Then $\E(\psi_{t+1}/n_{t+1}|\xit) = [\psi_t +
  \xit(\psi)]/n_{t+1} = [\psi_t + \psi_t/n_t]/(1+n_t) = \psi_t/n_t$.
  Thus, the matrix $\CT_{\xit}$ is a martingale.  We shall now prove
  the Lipschitz property. For all $k > 0$, $\psi_t > 0$:
  \[
  \psi^i_t/(n_t + k) \leq \psi^i_{t+k}/n_{t+k}
  \leq (\psi^i + k)/n_{t+k}.
  \]

  Note that $\left| \frac{\psi^i_{t+k}}{n_{t+k}} -
    \frac{\psi^i_{t}}{n_{t}} \right|$ is upper bounded by $\frac{k
    (n_t - \psi^i_t)}{n_t(n_{t}+k)}$ and $\frac{k
    \psi^i_t}{n_t(n_{t}+k)}$ and thus by $\frac{k \min\left\{\psi^i_t, n_t
      - \psi^i_k\right\} }{n_t(n_{t}+k)}$.
  Equating the two terms, we obtain
  $
  \left|
    \frac{\psi^i_{t+k}}{n_{t+k}}
    -
    \frac{\psi^i_{t}}{n_{t}}
  \right|
  \leq
  \frac{k}{2(n_{t}+k)}
  $.
  \lqed
\end{proof}
\fi

\begin{proof}[Lemma~\ref{lem:value_function_error}]
  The transitions $P, \hat{P}$ induced by any policy obey $\|P -
  \hat{P}\|_\infty < \epsilon$.  By repeated use of Cauchy-Schwarz and
  triangle inequalities:
  \begin{align*}
    \|V - \hat{V}\|_\infty
    &=
    \left\|
      r - \hat{r} + \gamma \left(P V - \hat{P}\hat{V}\right)
    \right\|_\infty
    \\
    &\leq
    \|r - \hat{r}\|_\infty
    + 
    \gamma
    \left\|
      P V - \hat{P}\hat{V}
    \right\|_\infty
    \\
    &\leq
    \epsilon
    + 
    \gamma
    \left\|
      P V - (P  - \tilde{P})\hat{V}
    \right\|_\infty
    \\
    &\leq
    \epsilon
    + 
    \gamma\left(
      \left\|
        P (V - \hat{V})
      \right\|_\infty
      +
      \left\|
        \tilde{P}\hat{V}
      \right\|_\infty
    \right)
    \\
    &\leq
    \epsilon
    + 
    \gamma\left(
      \|P \|_\infty\cdot\|V - \hat{V}\|_\infty
      +
      \|\tilde{P}\|_\infty\cdot\|\hat{V}\|_\infty
    \right)
    \\
    &\leq
    \epsilon
    + 
    \gamma\left(
      \left\|
        V - \hat{V}
      \right\|_\infty
      +
      \epsilon \cdot \frac{1}{1 - \gamma}
    \right)
  \end{align*}
  where $\tilde{P} = P - \hat{P}$, for which of course holds
  $\|\tilde{P}\|_\infty < \epsilon$.  Solving gives us the required
  result.
  \lqed
\end{proof}

\section{Hoeffding bounds for weighted averages}

Hoeffding bounds can also be derived for weighted averages.  Let us
first recall the standard Hoeffding inequality:
\begin{lemma}[Hoeffding inequality]
  If $\hx_n \defn \frac{1}{n} \sum_{i=1}^n x_i$, with $x_i \in
  [b_i,b_i+h_i]$ drawn from some arbitrary distribution $f_i$ and
  $\bar{x_n} \defn \frac{1}{n} \sum_i \E[x_i]$, then, for all $\epsilon \geq 0$:
  \begin{equation}
    \label{eq:hoeffding}
    \Pr\left(\hx_n \geq \bar{x}_n + \epsilon\right) \leq \exp\left(-\frac{2n^2 \epsilon^2}{\sum_{i=1}^n h_i^2}\right).
  \end{equation}
\end{lemma}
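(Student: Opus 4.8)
The plan is to prove this by the standard Chernoff-bounding technique combined with Hoeffding's lemma, working directly with the (implicitly independent) $x_i$. First I would reduce to the centred sum $S_n \defn \sum_{i=1}^n (x_i - \E x_i)$, observing that the event $\{\hx_n \geq \bar{x}_n + \epsilon\}$ is exactly $\{S_n \geq n\epsilon\}$, so it suffices to show $\Pr(S_n \geq n\epsilon) \leq \exp(-2n^2\epsilon^2 / \sum_i h_i^2)$. For any $s > 0$, Markov's inequality applied to the nonnegative variable $e^{sS_n}$ gives $\Pr(S_n \geq n\epsilon) \leq e^{-sn\epsilon}\, \E[e^{sS_n}]$, and by independence the moment generating function factorises as $\E[e^{sS_n}] = \prod_{i=1}^n \E[e^{s(x_i - \E x_i)}]$.

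The crux is Hoeffding's lemma: for a zero-mean random variable $Y \in [c, c+h]$ one has $\E[e^{sY}] \leq \exp(s^2 h^2 / 8)$. I would establish this from convexity of $u \mapsto e^{su}$, writing each point of $[c,c+h]$ as a convex combination of the two endpoints so that $e^{sY} \leq \frac{(c+h) - Y}{h}\, e^{sc} + \frac{Y - c}{h}\, e^{s(c+h)}$; taking expectations and using $\E Y = 0$ collapses the right-hand side to $e^{\psi(sh)}$ for an explicit function $\psi$ satisfying $\psi(0) = \psi'(0) = 0$ and $\psi''(\cdot) \leq 1/4$. A second-order Taylor expansion then yields $\psi(sh) \leq s^2 h^2 / 8$, which is the claimed bound.

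Applying Hoeffding's lemma to each factor with $h = h_i$ gives $\E[e^{sS_n}] \leq \exp\!\big(s^2 \sum_i h_i^2 / 8\big)$, whence $\Pr(S_n \geq n\epsilon) \leq \exp\!\big(-sn\epsilon + s^2 \sum_i h_i^2 / 8\big)$. Finally I would optimise the free parameter: the exponent is a quadratic in $s$ minimised at $s^* = 4n\epsilon / \sum_i h_i^2$, and substituting $s^*$ produces $\exp(-2n^2\epsilon^2 / \sum_i h_i^2)$, as required.

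I expect the main obstacle to be Hoeffding's lemma itself, the moment-generating-function bound for a bounded centred variable, since it is the only non-mechanical ingredient; the Chernoff reduction, the factorisation over independent terms, and the quadratic optimisation in $s$ are all routine. One point worth flagging is that the statement tacitly assumes the $x_i$ are independent, which is precisely what licenses the product factorisation of the joint moment generating function.
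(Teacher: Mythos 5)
Your proof is correct: it is the classical Chernoff--Hoeffding argument (exponential Markov bound, factorisation over independent summands, Hoeffding's lemma $\E[e^{sY}] \leq e^{s^2h^2/8}$ for a centred bounded variable, and optimisation at $s^* = 4n\epsilon/\sum_i h_i^2$), and all the computations check out. The paper itself offers no proof of this lemma --- it is merely ``recalled'' as a standard result before being specialised to weighted averages --- so there is nothing to compare against; your write-up supplies the standard derivation, and you are right to flag that independence of the $x_i$ is tacitly assumed, since that is exactly what justifies the factorisation of the moment generating function.
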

We have a weighted sum,
$\hx'_n \defn \sum_{i=1}^n w_i x'_i$, $\sum_{i=1}^n w_i = 1$.
If we set $v_i \defn n w_i$, then we can write the above as
$\frac{1}{n} \sum_{i=1}^n v_i x'_i$.  So, if we let $x_i = v_i x'_i$
and assume that $x'_i \in [b, b+h]$, then $x_i \in [v_i b + v_i (b +
h)]$.  Substituting into \eqref{eq:hoeffding} results in
\begin{align}
  \Pr\left(\hx_n \geq \bar{x} + \epsilon\right)
  \
  &\leq
  \exp\left(-\frac{2\epsilon^2}{h^2\sum_{i=1}^n w_i^2}\right).
\end{align}
Furthermore, note that
\begin{equation}
  \Pr\left(\hx_n \geq \bar{x} + \epsilon\right)
  <
  \exp\left(-\frac{2\epsilon^2}{h^2}\right),
  \label{eq:quick_hoeffding_bound}
\end{equation}
since $w_i^2 \leq w_i$ for all $i$, as $w_i \in [0,1]$.  Thus $\sum_i
w_i^2 \leq \sum_i w_i = 1$. Note that $\sum_i w_i^2 = 1$ iff $w_j = 1$
for some $j$.

\bibliographystyle{plainnat}
\bibliography{../../../bib/misc,../../../bib/mine}

\end{document}